\pdfoutput=1
\documentclass{article}

\usepackage{xcolor}

\usepackage[final]{corl_2026}

\usepackage{pgf}

\everymath=\expandafter{\the\everymath\displaystyle}
\makeatletter\@ifpackageloaded{underscore}{}{\usepackage[strings]{underscore}}\makeatother

\usepackage{amsmath}
\usepackage{amssymb}
\usepackage{mathtools}
\usepackage{amsthm}
\newtheorem{theorem}{Theorem}

\usepackage{algorithm}
\usepackage{algpseudocode}

\usepackage{graphicx}
\usepackage{subcaption}
\usepackage{wrapfig}
\usepackage{float}
\graphicspath{
  {figures/}
  {figures/1_teaser_real_world_criticality_plot_one_row/}
  {figures/2_hammer_sim_criticality_plots/}
  {figures/5_hardware_stackcube_demo_quality/}
  {figures/app_criticality_comparison_stackcube_sim/}
  {figures/app_criticality_comparison_peginsertionside_sim/}
  {figures/app_criticality_comparison_turnfaucet_sim/}
  {figures/app_hw_trajectory_quality_tapehang/}
}

\usepackage{booktabs}
\usepackage{multirow}
\usepackage{array}

\usepackage{enumitem}

\newcommand{\blfootnote}[1]{%
  \begingroup
    \renewcommand{\thefootnote}{}%
    \footnotetext{#1}%
  \endgroup
}

\title{Where Small Mistakes Matter: Quantifying State Criticality for Imitation Learning}

\author{
  Yusuf Umut Ciftci\textsuperscript{*}\\
  University of Southern California\\
  Stanford University\\
  \And
  Joonwon Kang\textsuperscript{*}\\
  Georgia Institute of Technology\\
  Stanford University\\
  \AND
  Ajay Suresha Sathya\\
  DIENS, Inria Paris, PSL University\\
  Stanford University\\
  \And
  Somil Bansal\\
  Stanford University\\
}

\begin{document}
\hypersetup{
  pdftitle={Where Small Mistakes Matter: Quantifying State Criticality for Imitation Learning},
  pdfauthor={Yusuf Umut Ciftci, Joonwon Kang, Ajay Suresha Sathya, Somil Bansal}
}
\maketitle
\blfootnote{\textsuperscript{*}Equal contribution.}

\begin{abstract}
Imitation learning typically treats every demonstration state as equally important, optimizing a behavior-cloning objective uniformly across the dataset. Yet not all states are created equal: in a small number of critical states, even a slight deviation from the expert action can lead to a failure, while in many other states such errors are recoverable and inconsequential.
We formalize state criticality in imitation learning as the worst-case degradation in downstream value induced by a bounded perturbation of the expert action at a given state.
We then propose \textsc{SCRIM}, a practical \textbf{S}tate \textbf{CRI}ticality \textbf{M}easure that estimates this quantity by training an adversarial disturbance policy and a value critic around a fixed imitation policy.
SCRIM assigns each demonstration state a score reflecting how vulnerable the task outcome is to small, policy-relevant action errors. We show that these scores can be used to reweight behavior-cloning losses, improving imitation at consequential states and increasing overall task performance. We further show that SCRIM enables data curation by selecting compact, high-value subsets of demonstrations that preserve performance while substantially reducing the amount of training data. We evaluate SCRIM across various simulated and real-world robotic manipulation tasks, and demonstrate that SCRIM improves policy training, enables efficient data selection, and provides interpretable insights into which moments in a demonstration are most critical for task success.
Project website: \url{https://sia-lab-git.github.io/SCRIM/}
\end{abstract}

\keywords{Data Quality, Imitation Learning, Robotic Manipulation}

\section{Introduction}
\label{sec:intro}

\begin{figure}[t]
  \centering
  \resizebox{\linewidth}{!}{\input{figures/1_teaser_real_world_criticality_plot_one_row/teaser.pgf}}
  \caption{
  Our state
  criticality measure, SCRIM, along a
  successful demonstration of cube stacking (left) and tape hanging (right),
  with a filmstrip of the rollout.
  SCRIM peaks at semantically critical moments where small action errors are likely to derail the task, such as aligning with the cube before grasp, aligning the cube for placement, and aligning the tape with the hook.
  }
  \vspace{-5pt}
  \label{fig:teaser}
\end{figure}

A simple and widespread Imitation learning (IL) approach, behavior cloning (BC), trains a policy to match the expert action at each demonstrated state. Recent advances in compute, large-scale robot datasets~\cite{ox2024open, Khazatsky-RSS-24}, visuomotor policy architectures~\cite{Chi-RSS-23}, and foundation models~\cite{zitkovich23a,Zhao-RSS-23,intelligence2025pi_} have substantially expanded the capabilities of BC policies, enabling robots to perform increasingly complex manipulation tasks. IL is especially attractive for manipulation, where contact-rich, nonsmooth dynamics~\cite{brogliato1996nonsmooth} and sparse task feedback make reinforcement learning (RL) difficult, even when an appropriate reward function is available.

Despite advances, BC remain brittle when it encounters states not well-covered by the demonstrations~\cite{imitationdataquality}. Improving robustness typically requires collecting more demonstrations, which is expensive. Moreover, useful recovery data often requires expert demonstrators to \textit{manually} anticipate critical moments, and demonstrate how to recover from them. Residual RL offers an alternative by learning a corrective policy on top of a BC policy~\citep{johannink2019residual, ankile2025residual, yuan2024policy}. However, residual RL remains challenging in sparse-reward manipulation settings due to high sample complexity, often requiring prohibitively many environment interactions or a carefully engineered simulation environment.

This paper investigates a complementary approach to improving BC policies: identifying \emph{where} accurate imitation matters most. Standard BC objectives treat all demonstrated states equally, yet manipulation tasks contain a small number of critical states where a small deviation from the expert action can derail the episode, while most others are forgiving. For example, in cube-stacking, the alignment phases immediately before grasping or placing the cube are critical even though they appear uneventful: a small action error can prevent a stable grasp or lead to an unstable placement. Errors of similar magnitude during earlier reaching motions, by contrast, are often recoverable in subsequent steps. Robust imitation therefore requires a state criticality measure that quantifies which states are most consequential for task success.
A reliable criticality measure could support targeted policy improvement. During training, it can reweight the BC objective to prioritize states where errors are most likely to cause failure. During data curation, it can identify high-criticality demonstrations, enabling smaller subsets to match or exceed full-dataset performance while reducing training and storage cost~\citep{Ding2025SurprisePA}.

Systematically quantifying state criticality for IL is nontrivial~\citep{Huang2018EstablishingAT}. A useful measure should not flag merely states with large actions, contact forces, or proximity to objects, but states where \emph{small, policy-relevant action errors} substantially degrade task performance. We formalize this as the worst-case increase in task failure under a bounded adversarial perturbation of the intended action, and propose \textsc{SCRIM}, a State Criticality Measure that learns it by training an adversarial residual policy and critic around a fixed BC policy. SCRIM produces per-state scores identifying where the policy is most vulnerable to small action errors.
We evaluate SCRIM on six manipulation tasks. Qualitatively, it identifies semantically meaningful critical phases---grasp alignment, contact timing---that geometric and value-based baselines miss (Fig.~\ref{fig:teaser}). Quantitatively, SCRIM-guided data selection and loss reweighting significantly improve BC over standard training and alternative criticality measures.

Concretely, this paper's contributions are as follows:
\textbf{(1)} introducing a formal notion of state criticality for imitation learning;
\textbf{(2)} proposing \textsc{SCRIM}, a practical algorithm for estimating state criticality;
\textbf{(3)} showing that SCRIM identifies semantically meaningful critical states in real-robot and simulated manipulation demonstrations;
\textbf{(4)} demonstrating that SCRIM can improve imitation learning through criticality-aware loss reweighting and data curation, leading to stronger BC policies than standard training and baseline criticality measures.

\section{Related Work}
\label{sec:related}

\noindent \textbf{Imitation learning and demonstration data.}
Data is the key ingredient in imitation learning. A policy can only be as good as the demonstrations it is trained on and it's ability to stick to the demonstrations. Both the quality and the composition of a dataset strongly shape downstream performance~\citep{imitationdataquality}. This has motivated a large body of work on \emph{data curation}, which selects or reweights existing demonstrations based on task distribution~\citep{remixdata, Parekh2025TowardsBB}, surrogate models ~\citep{Agia2025CUPIDCD, Dass2025DataMILSD}, quality labels or estimates~\citep{pmlr-v229-kuhar23a, Hejna2025RobotDC, Chen2025CuratingDU}, or by shaping the data \emph{at collection time}~\citep{Ciftci2024SAFEGILSG, Qiu2025SafetyAwareIL, Brandfonbrener2022VisualBT}. While others \cite{Sindhwani2018LearningCV, Mehta2024StableBCCC, Brandfonbrener2022VisualBT} improved robustness to deviations from the demonstrations. Underlying these efforts are the important of small deviations from expert actions and a recurring observation that not all states are equally important. \citet{Huang2018EstablishingAT} argue that the essence of a policy is captured by a few \emph{critical states} where taking the right action is essential. We, formalize the importance of each state with a criticality score and use this single measure both to curate demonstrations and to reweight the imitation objective toward the states where small mistakes matter most.

\noindent \textbf{Residual RL.}
Residual RL trains a \emph{corrective} policy on top of a fixed base
controller and has become a popular recipe for refining
imitation policies~\citep{johannink2019residual, yuan2024policy, ankile2025residual, vla_res_rl, Ankile2024FromIT} for improved task success. We, however, aim to create a tool for increased understanding of imitation learning alongside performance improvement.
Moreover, although more sample efficient than full fine tuning of the base policy learning, such corrections still demand many environment
interactions, especially under the sparse rewards
typical of manipulation. We adopt the same residual
formulation but invert its objective. Rather than learning to correct the base
policy, we train an adversarial residual that \emph{degrades} it. Because actions that break a
task are more abundant than the corrective actions, this
adversarial residual is an easier exploration problem---an advantage
confirm empirically (Section~\ref{sec:experiments}).

\section{Background and Problem Formulation}
\label{sec:background}
We consider discrete-time sequential decision-making problems modeled as a Markov decision process (MDP), $\mathcal{M} = (\mathcal{S}, \mathcal{A}, P, r, \rho_0, \gamma)$,
where $\mathcal{S}$ is the state space, $\mathcal{A}$ is the action space,
$P(s' \mid s, a)$ is the transition kernel,
$r : \mathcal{S} \times \mathcal{A} \to \mathbb{R}$ is the reward function,
$\rho_0$ is the distribution over initial states, and
$\gamma \in [0, 1)$ is the discount factor.
A (stationary) policy $\pi(a \mid s)$ maps states to a distribution over actions; when the policy is deterministic, we write $a = \pi(s)$.

\noindent \textbf{Imitation learning.}
Given a demonstration dataset
$\mathcal{D} = \{ (s_i, a_i) \}_{i=1}^{N}$ collected by an expert policy
$\pi_E$, where $a_i \sim \pi_E(\cdot \mid s_i)$,
behavior cloning (BC) casts imitation learning as supervised learning, fitting a
parameterized policy $\pi_\theta$ to reproduce expert actions by
minimizing the loss,
\begin{equation}
  \mathcal{L}_{\mathrm{BC}}(\theta) =
    \mathbb{E}_{(s, a) \sim \mathcal{D}}
    \big[ \ell\big( \pi_\theta(s), a \big) \big],
  \label{eq:bc-objective}
\end{equation}
where $\ell$ is an action-prediction loss, e.g.\ the squared error
$\| \pi_\theta(s) - a \|_2^2$.
This objective weights all demonstration states equally. However, action errors are not equally consequential across states: in some states, a small deviation from the expert action may be easily recoverable, while in others it may lead to task failure.

\noindent \textbf{Reinforcement learning.}
Our analysis borrows the value-function machinery of reinforcement learning (RL), though imitation itself uses no reward signal. RL seeks a policy maximizing the expected discounted return $J(\pi) = \mathbb{E}_{\tau \sim \pi} \left[ \sum_{t=0}^{\infty} \gamma^t \, r(s_t, a_t) \right]$ over trajectories $\tau = (s_0, a_0, s_1, \dots)$ with $s_0 \sim \rho_0$, $a_t \sim \pi(\cdot \mid s_t)$, and $s_{t+1} \sim P(\cdot \mid s_t, a_t)$. For a fixed $\pi$, the \emph{Q-function} conditions this return on the first state and action,
\begin{equation}
  Q^{\pi}(s, a) = \mathbb{E}_{\tau \sim \pi}
    \left[ \sum_{t=0}^{\infty} \gamma^t \, r(s_t, a_t)
      \,\middle|\, s_0 = s, \, a_0 = a \right],
  \label{eq:q-function}
\end{equation}
and satisfies the Bellman equation $Q^{\pi}(s, a) = r(s, a) + \gamma \,
  \mathbb{E}_{s' \sim P(\cdot \mid s, a),\, a' \sim \pi(\cdot \mid s')}
  \!\big[ Q^{\pi}(s', a') \big]$. We will use $Q^\pi$ to assess how sensitive task performance is to the action at a given state.

\noindent \textbf{Problem statement.}
\label{sec:problem}
Our goal is to quantify the \emph{criticality} of a demonstration state for an imitation policy $\pi$. Intuitively, a state is critical if the success of the task is highly sensitive to the action selected at that state. Equivalently, critical states are those in which small action errors can substantially degrade downstream performance or cause task failure. In contrast, non-critical states admit a larger set of actions from which the policy can still recover and complete the task.

Given a learned imitation policy $\pi$ and access to rollouts of this policy, we seek to learn a state-dependent criticality measure that identifies where precise action selection matters most. Such a measure can be used to diagnose weaknesses of an imitation policy, prioritize data collection or policy improvement, and trigger targeted intervention during deployment.

\section{Method} \label{sec:method}
Our goal is to identify states in which small action errors can have large downstream consequences. To do so, we treat a learned imitation policy $\pi$ as a fixed \emph{base} controller and ask how much an adversarial, bounded perturbation to its action can degrade task outcome. This leads to a residual MDP in which the only decision variable is the perturbation applied around the base action.

\noindent \textbf{Adversarial residual MDP.}
At each state, we decompose the executed action into the action proposed by the base policy and a bounded residual disturbance:
\begin{equation}
  a = \pi(s) + \delta,
  \qquad
  \delta \in \Delta_\epsilon \coloneqq
    \{\, \delta \in \mathcal{A} : |\delta_j| \le \epsilon_j
      \;\; \forall j \,\},
  \label{eq:residual-action}
\end{equation}
where $\epsilon \in \mathbb{R}^{d}_{\ge 0}$ defines a per-coordinate perturbation budget and $d=\dim(\mathcal{A})$. This budget models the magnitude of the small action errors that an imitation policy might make at a state.
Freezing $\pi$ and treating the residual $\delta$ as the only decision
variable induces a \emph{residual MDP}
$\widetilde{\mathcal{M}}^{\,\epsilon}_{\pi}
  = (\mathcal{S}, \Delta_\epsilon, \widetilde{P}_\pi, \widetilde{r}_\pi,
     \rho_0, \gamma)$,
whose transition kernel and reward are those of $\mathcal{M}$ evaluated at the
disturbed action,
\begin{equation}
  \widetilde{P}_\pi(s' \mid s, \delta) = P\big(s' \mid s, \pi(s) + \delta\big),
  \qquad
  \widetilde{r}_\pi(s, \delta) = r\big(s, \pi(s) + \delta\big).
  \label{eq:residual-mdp}
\end{equation}
The state space, initial-state distribution, and discount factor are inherited from the original MDP $\mathcal{M}$. A disturbance policy $\nu(\delta \mid s)$ acts only through the residual action space $\Delta_\epsilon$; setting $\nu \equiv 0$ recovers the unperturbed execution of the base policy $\pi$.

\noindent \textbf{Critic.}
We instantiate the residual MDP reward (Eq.~\ref{eq:residual-mdp}) with a sparse, outcome-based signal.
A rollout receives a reward $r = -1$ if it \emph{succeeds} and $r = 0$ otherwise.
Under the standard maximize-return objective $J(\pi)$, successful trajectories therefore have lower, more negative value, while failed or non-successful trajectories have value closer to zero. Consequently, larger values correspond to worse outcomes.

Next, we train an adversarial disturbance policy on this reward in the residual MDP and retain its converged critic $\bar{Q}(s,a)$.
$\bar{Q}$ is always queried on the combined action executed by the environment, $a = \pi(s) + \delta$. Therefore, an increase in $\bar{Q}(s,a)$ indicates that the disturbance has made the trajectory more failure-prone.

\noindent \textbf{\textcolor{methoddarkred}{S}tate \textcolor{methoddarkred}{CRI}ticality \textcolor{methoddarkred}{M}easure (\textcolor{methoddarkred}{SCRIM}).}
We are now ready to define criticality: a state $s$ is \emph{critical} if some in-budget disturbance $\delta \in \Delta_\epsilon$ applied at $s$ substantially degrades the task outcome relative to faithfully following $\pi$.
Conversely, $s$ is \emph{forgiving} if no admissible disturbance at $s$ changes the outcome appreciably, compared to unperturbed execution.
Thus, the criticality of a state is the worst-case degradation in outcome value induced by a small action disturbance.

SCRIM precisely measures this worst-case degradation in the outcome value. Since we evaluate criticality on demonstration states, we define the ideal score relative to the expert action $\pi_E(s)$:
\begin{equation}
  C_\epsilon(s)
    = \max_{\delta \in \Delta_\epsilon} \bar{Q}\big(s,\, \pi_E(s) + \delta\big)
      - \bar{Q}\big(s,\, \pi_E(s)\big)
    \;\ge\; 0 .
  \label{eq:criticality}
\end{equation}
The first term is the worst outcome value attainable by an admissible perturbation of the expert action, while the second term is the outcome value of executing the expert action itself.
Because larger $\bar{Q}$ corresponds to worse outcomes under our reward convention, $C_\epsilon(s)$ is large when a small admissible action error can make failure substantially more likely, and close to zero when the state is forgiving.

In practice, we estimate \eqref{eq:criticality} with an adversary $\delta_\pi(s)$ learned jointly with the critic. Both the critic $\bar{Q}$ and the adversary are trained in the residual MDP induced by the learned imitation policy $\pi$ as the base controller. At evaluation time, however, we compute the criticality score around the expert action $\pi_E(s)$ on demonstration states:
\begin{equation}
  \Delta Q(s)
    = \bar{Q}\big(s,\, \pi_E(s) + \delta_\pi(s)\big)
    - \bar{Q}\big(s,\, \pi_E(s)\big) .
  \label{eq:delta-q}
\end{equation}
This evaluation asks whether a small prediction error around the demonstrated action would substantially increase the likelihood of downstream failure.
A state is \emph{critical} when $\Delta Q(s)$ is large: a small disturbance drives the value toward $0$, flipping a likely success into a likely failure, so the task is sensitive to exactly how the action is executed there.
A state is \emph{forgiving} when $\Delta Q(s) \approx 0$, i.e.\ no admissible disturbance appreciably changes the downstream outcome.

\noindent \textbf{Learning the adversary.}
The adversary policy is a
network $\mu_\theta$ that maps a state to a disturbance,
$\delta_\theta(s) = \epsilon \odot \tanh\!\big(\mu_\theta(s)\big)$,
where the $\tanh$ bounds each coordinate in $[-1,1]$ and $\epsilon$ rescales the output to the per-coordinate budget. Thus, $\delta_\theta(s) \in \Delta_\epsilon$ for every state.
We train $\mu_\theta$ jointly with the critic using soft actor-critic (SAC) in the residual MDP. The disturbance actor maximizes
$\bar{Q}\big(s, \pi(s) + \delta_\theta(s)\big)$,
i.e.\ it seeks the most failure-prone admissible disturbance, and at convergence it provides the $\delta_\pi(s)$ used in Eq.~\eqref{eq:delta-q}. See Algorithm~\ref{alg:scrim} (Appendix~\ref{sec:appendix-algorithm}).

The critic is implemented as a clipped double-$Q$ pair and trained using temporal-difference targets with Polyak-averaged target networks. We denote the conservative critic used for scoring by $\bar{Q}(s,a) = \min_i Q_{\bar{\phi_i}}(s,a).$
The actor is entropy-regularized with an automatically tuned temperature. During training, transitions $(s,\delta,r,s')$ are collected by executing the disturbed action
$a = \operatorname{clip}_{\mathcal{A}}\big(\pi(s) + \delta_\pi(s)\big)$
in the environment and storing the resulting transition in a replay buffer. This ensures that the adversary is trained on the same clipped physical actions that critic evaluates.

\noindent \textbf{Sampling-distribution calibration}
Both downstream uses of SCRIM (Section~\ref{sec:experiments}) first turn the
per-state criticality scores into a sampling distribution over the
demonstration states through a single, measure-agnostic pipeline.
Specifically, we $z$-score standardize the criticality scores to $z_i$ and map them to the temperature-$T$ softmax
$p_i(T) \propto \exp(z_i / T)$.
We choose the temperature $T$ so that the resulting distribution reaches a common target sharpness, measured by its fractional effective sample size $\mathrm{ESS}\%$.
We defer the standardization procedure, the definition of $\mathrm{ESS}$, and the temperature-selection details to Appendix~\ref{sec:appendix-calibration}.
The calibrated criticality scores can then be used for policy improvement, e.g., by upweighing critical samples.

\section{Experiments}
\label{sec:experiments}
The experiments are designed to answer four questions: (i)~does SCRIM identify intuitive critical states in a manipulation task, and more
reliably than existing criticality heuristics?, (ii)~does weighting behavior cloning by SCRIM improve policy performance?,
(iii)~can SCRIM curate a small subset of demonstrations that retains, or
improves on, full-data performance? (iv)~can SCRIM be feasibly evaluated on hardware via real world RL, and does it improve a real-world VLA policy? We study (i)--(iii) in simulation (Section~\ref{sec:exp-sim}) and validate (iv) on hardware (Section~\ref{sec:exp-hw}).

\subsection{Simulation Results}
\label{sec:exp-sim}
We consider four diverse manipulation tasks spanning a range of difficulty levels and requiring \textit{precise} contact-rich interaction making them suitable for studying state criticality:
stacking cubes, faucet turning, nail hammering and peg insertion.
The tasks are spread across two benchmark environments: \emph{Hammer} is the MetaWorld~\cite{yu2020meta} task (grasp a hammer and drive in a nail), with $84{\times}84$ RGB and a $4$-dim proprioceptive vector as observations and a
$4$-dim end-effector action; its BC policy is a deterministic CNN$+$MLP trained
on $20$ expert demonstrations. \emph{StackCube}, \emph{PegInsertion} and \emph{TurnFaucet}
are ManiSkill2~\cite{gumaniskill2} tasks with state observations and a Behavior
Transformer~\cite{shafiullah2022behavior} BC policy with $20$ time-steps of history; trained on $974$, $986$ and $1008$ demonstrations across 10 faucet models, respectively.
All actions are normalized to
$[-1,1]^{d_a}$.

To estimate criticality we freeze $\pi$ and train the SAC adversary against it
(Section~\ref{sec:method}) with a sparse success reward and a small in-budget
disturbance bound ($0.20$ per axis for MetaWorld task, $0.005$ for the ManiSkill2
tasks).
To facilitate comparison, every criticality measure---ours and the baselines---is mapped to a
sampling distribution by the same pipeline, calibrated for every measure to the same $\mathrm{ESS}$, effective sample size, so that the distributions are similarly peaked, differing only in \emph{which} states they emphasize.
We report task success rate and defer per-environment details to Appendix ~\ref{sec:appendix-sim}.

\noindent \textbf{Baselines.}\label{sec:baselines} SCRIM is compared against the following five baselines, each of which assigns a scalar criticality score to state
$s$ usable for both data weighting and data selection similarly to SCRIM:
(i)~\emph{uniform sampling}, the standard i.i.d.\ imitation-learning baseline that treats every state as equally important; (ii)~\emph{distance to object of interest}, a geometric heuristic that scores states by the (negative) end-effector--object distance and requires a known target pose; (iii)~\emph{action magnitude}, which scores states by $-\|a\|$ on the premise that intricate, contact-rich manipulation uses smaller commanded actions than coarse free-space motion; (iv)~\emph{policy $Q$-function variance}~\citep{Karino2020Identifying}, the empirical variance $\operatorname{Var}_{a \sim \mathcal{U}(\mathcal{A})}\!\big[ Q^{\pi_b}(s, a) \big]$ of the base policy's Q-value over actions sampled uniformly from $\mathcal{A}$; and (v)~\emph{$Q$-value discrepancy}~\citep{Huang2018EstablishingAT}, the gap $\max_{a} Q^{\pi}(s, a) - \tfrac{1}{|\mathcal{A}|} \sum_{a} Q^{\pi}(s, a)$ between the best and average Q-value at a state. More detailed baseline descriptions are presented in the appendix in Sec.~\ref{sec:appendix-baselines}.

\begin{figure}[t]
  \centering
\resizebox{0.9\linewidth}{!}{\input{figures/2_hammer_sim_criticality_plots/criticality_comparison_hammer_overlaid.pgf}}
  \caption{\textbf{Per-method criticality along Hammer (MetaWorld) task demonstration.}
  All measures min--max normalized to a common scale.}
  \label{fig:criticality_comparison}
\end{figure}

\noindent \textbf{Criticality estimates.}
To qualitatively analyze the criticality measures, Fig.~\ref{fig:criticality_comparison} plots each score along a single hammer demonstration.
SCRIM rises as the gripper approaches the hammer immediately before the grasp,
identifying the alignment over the handle as the most critical phase.
This state is not obviously critical from task progress alone: the gripper has not yet contacted the object, but a small action error can prevent a stable grasp of the handle, causing the hammer to fall and derailing the episode.
In contrast, with good alignment, the grasp itself and the states far from the alignment phase are forgiving, with action errors recoverable in subsequent steps.

\begin{wrapfigure}{r}{0.25\textwidth}
  \centering
  \vspace{-19pt}
  \includegraphics[width=\linewidth]{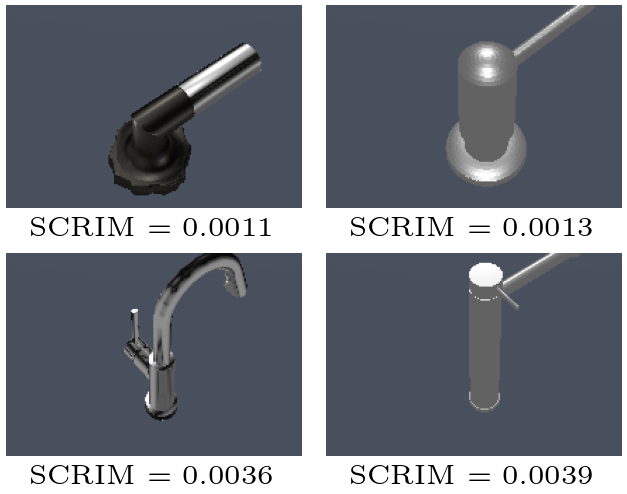}
  \caption{Faucet models ordered by mean SCRIM over the demonstrations, from the two
  easiest (top) to the two hardest (bottom).}
  \vspace{-14pt}
  \label{fig:faucet_difficulty}
\end{wrapfigure}

By contrast, the baselines peak away from this alignment phase. \emph{Distance to object} rises as the gripper nears the hammer but is flat afterward, failing to separate the post-grasp states; \emph{action magnitude} spikes at every phase transition, but there are several such transitions and not necessarily critical. The value-based baselines, \emph{Q-function variance} and \emph{Q-value discrepancy}, miss the alignment phase, peaking narrowly during contact-rich phases where the learned $Q$-function is most sensitive to action variation.
However, sensitivity across arbitrary actions is not necessarily the same as vulnerability to small, policy-relevant action errors. SCRIM directly targets the latter by measuring the worst-case degradation induced by bounded perturbations around the demonstrated action.
Additionally, the value-based baselines aggregate expectations or variances across \textit{all} actions, requiring an accurate $Q$-function everywhere.
In contrast, SCRIM only requires reliable value estimates in the local neighborhood of the policy actions explored by the adversary, focusing the estimate on the perturbations most relevant to imitation errors.

\noindent \textbf{SCRIM as a measure of task difficulty.}
Because SCRIM is computed from a single critic trained across demonstrations of the same task, its scores are comparable across trajectories and task variations and appears to be an effective proxy for the relative task difficulty.
We illustrate this on the \emph{TurnFaucet} task, where different faucet geometries induce different levels of manipulation difficulty.
Ordering faucet models by their SCRIM yields an intuitive easy-to-hard ranking (Fig.~\ref{fig:faucet_difficulty}):
models with broad, forgiving handles have low mean SCRIM, whereas models requiring precise grasping and turning have higher mean SCRIM.

\noindent \textbf{Weighted imitation.}
Demonstration datasets are dominated by non-critical states, with easily recoverable free-space motions heavily outnumbering the few critical states where a small action error can cause failure.
Uniform BC spreads its capacity uniformly across samples, decreasing the imitation accuracy in these decisive states, which can now be addressed using SCRIM.
Taking the calibrated distribution $p$ from Section~\ref{sec:method}
(Eq.~\ref{eq:softmax} in Appendix~\ref{sec:appendix-calibration}), we form
 mean-one per-sample weights $w_i = N\,p_i$ and train a fresh policy by minimizing the BC objective (Eq.~\ref{eq:bc-objective}) reweighted by $w_i$, equivalently sampling each minibatch from $p$ with replacement so the policy sees the decisive states more often during training.

Table~\ref{tab:results_weighted} reports the success rates for SCRIM-weighted BC and the various baselines. SCRIM weighting improves the average success rate over uniform BC, gaining on three of the four tasks and essentially matching it on PegInsertion. It also outperforms the other baseline weights: the strongest, Q-function variance, lands barely above uniform BC, while the geometric heuristics fare poorly because their weighting underweights genuinely critical states. This policy improvement provides strong evidence that SCRIM is a more effective criticality measure than the baselines.

\begin{figure}[t]
  \centering
  \resizebox{0.9\linewidth}{!}{\input{figures/3_maniskill_actionerror_statecriticality_plot/error_vs_criticality_updated.pgf}}
  \caption{\textbf{Action error vs.\ state criticality.} Histograms of BC action-error percentile against SCRIM- percentile on two tasks, each under Uniform BC and SCRIM-weighted BC.
  Each panel is annotated with its Spearman correlation $\rho$ and success
  rate.}
  \vspace{-7pt}
  \label{fig:error_vs_criticality}
\end{figure}

Figure~\ref{fig:error_vs_criticality} analyzes more closely the distribution of action-error percentile vs SCRIM percentile and its influence on policy improvement. For uniform BC on \emph{StackCube} demonstrations, the action error rank-correlates positively with criticality, with the distribution more heavily concentrated in regions with higher criticality and higher action-error. SCRIM-weighting flattens this relationship reducing error precisely at these states and raising the success rate. %
Conversely, SCRIM weighting appears to improve base policy success rates only when the base policy exhibits higher action errors at critical states, as seen for the \emph{PegInsertion} task. This reveals a limitation of policy improvement using SCRIM: due to lack of RL-style exploration for new behaviors, it cannot improve overall performance beyond demonstration data if the base policy is already accurate at critical states.

\begin{table}[t]
  \centering
  \caption{\textbf{Weighted training.} Success rates (\%) over 100 rollouts. Each criticality measure is
    used to weight the sampling of states during training (Section~\ref{sec:baselines}). %
    Best per task in \textbf{bold}.}
  \label{tab:results_weighted}
  \setlength{\tabcolsep}{3pt}
  \renewcommand{\arraystretch}{0.9}
  \small
  \begin{tabular}{lccccc}
    \toprule
    Method & StackCube & TurnFaucet  & PegInsertion & Hammer & Average \\
    \midrule
    Uniform sampling (BC)    & $68.0$ & $39.0$  & $\mathbf{26.0}$ & $61.0$ & $48.5$ \\
    Distance to object       & $4.0$  & $4.0$  & $0.0$ & $0.0$  & $2.0$  \\
    Action magnitude         & $41.0$ & $21.0$ & $5.0$ &$69.0$ &  $34.0$ \\
    Q-function variance      & $77.0$ & $44.6$ & $10.0$ & $65.0$ & $49.2$ \\
    Q-value discrepancy~\citep{Huang2018EstablishingAT} & $74.0$ & $33.0$ & $10.0$ & $67.0$ & $46.0$ \\
    \textbf{Ours}            & $\mathbf{91.0}$ & $\mathbf{47.0}$ & $25.0$ & $\mathbf{83.0}$ & $\mathbf{61.5}$ \\
    \bottomrule
  \end{tabular}
\end{table}

\begin{wraptable}{r}{0.48\linewidth}
  \centering
  \small
  \setlength{\tabcolsep}{3pt}
  \caption{\textbf{Matched interaction.} Success (\%) of residual RL~\cite{yuan2024policy}
    and Ours. Match: extra interaction residual RL needs to reach Ours.}
  \label{tab:residual_rl_matched}
  \begin{tabular}{lcccc}
    \toprule
    Task & Budget & Res.\ RL & Ours & Match \\
    \midrule
    StackCube & 100k & 0.0 & \textbf{85.0} & 7.5$\times$ \\
    TurnFaucet & 200k & 16.0 & \textbf{52.0} & 1.5$\times$ \\
    PegInsertion & 500k & 0.0 & \textbf{18.0} & 5.8$\times$ \\
    Hammer & 100k & \textbf{79.0} & 74.0 & --- \\
    \bottomrule
  \end{tabular}
\end{wraptable}

\noindent \textbf{Residual policy to improve performance.}
A natural alternative for policy improvement is to train a residual RL policy
that \emph{corrects} the base policy rather than SCRIM's adversarial RL approach. Our intuition is actions that break a task are more abundant than corrective actions that improve a policy, thereby making adversarial RL a fundamentally easier problem due to RL's sampling-based exploration. To validate this intuition, we also trained a residual-RL baseline~\citep{yuan2024policy} using SAC, where a residual actor learns a bounded corrective delta on the frozen base.
Under a \emph{matched} interaction budget, residual RL falls below the base policy it is meant to correct on all three ManiSkill tasks, while SCRIM-weighted BC already improves on it for StackCube and TurnFaucet; residual RL needs several times more interaction to catch up (Table~\ref{tab:residual_rl_matched}). Hammer is the exception: residual RL improves the base policy readily and leads early, although SCRIM reaches the same peak
success within 500k interactions. SCRIM's interaction efficiency is therefore task-dependent, and the two approaches are complementary.

\noindent \textbf{Curation of datasets that amplify critical signals }
We next investigate SCRIM for data curation: selecting an informative subset of demonstrations under a data budget.
Using the same softmax conversion of the criticality scores as in training, we sample without replacement to retain $25\%$ of the dataset and then train standard BC on this subset (Table~\ref{tab:results_selection}).  SCRIM attains the best average success rate and wins on three of the four tasks, with uniform selection narrowly ahead on PegInsertion. With only a quarter of the data, SCRIM-selected subsets match or exceed uniform BC trained on the \emph{full} dataset, indicating its effectiveness for task-relevant data curation.
\begin{table}[h]
  \centering
  \caption{\textbf{Data selection.} Success rates (\%) over 100 rollouts. Each criticality measure is converted into a sampling distribution over demonstration states, from which $25\%$ of the states are drawn without replacement; standard i.i.d.\ behavior cloning is then trained on this subset (Section~\ref{sec:baselines}). For reference, the top panel reports behavior cloning and \textbf{Ours} trained on the full (100\%) dataset. Best per column among the 25\% methods in
    \textbf{bold}.
    $^{\dagger}$\,Ours at 100\% is our criticality-weighted training of
    Table~\ref{tab:results_weighted}, included here as reference.}
  \label{tab:results_selection}
  \setlength{\tabcolsep}{3pt}
  \renewcommand{\arraystretch}{0.9}
  \small
  \begin{tabular}{llccccc}
    \toprule
    Data & Method & StackCube & TurnFaucet & PegInsertion & Hammer & Average \\
    \midrule
    \multirow{2}{*}{\rotatebox[origin=c]{90}{100\%}}
      & Uniform (BC)             & $68.0$ & $39.0$ & $26.0$ & $61.0$ & $48.5$ \\
      & Ours$^{\dagger}$          & $91.0$ & $47.0$ & $25.0$ & $83.0$ & $61.5$ \\
    \midrule
    \multirow{6}{*}{\rotatebox[origin=c]{90}{Subset (25\%)}}
      & Uniform sampling         & $73.0$ & $41.0$ & $\mathbf{14.0}$ & $60.0$ & $47.0$ \\
      & Distance to object       & $9.0$  & $1.0$  & $0.0$ & $0.0$  & $2.5$  \\
      & Action magnitude         & $39.0$ & $12.0$ & $7.0$ & $60.0$ & $29.5$ \\
      & Q-function variance      & $76.0$ & $37.0$ & $8.0$ & $70.0$ & $47.8$ \\
      & Q-value discrepancy~\citep{Huang2018EstablishingAT} & $81.0$ & $29.0$ &$11.0$ & $73.0$ & $48.5$ \\
      & \textbf{Ours}            & $\mathbf{83.0}$ & $\mathbf{43.0}$ & $13.0$ & $\mathbf{77.0}$ & $\mathbf{54.0}$ \\
    \bottomrule
  \end{tabular}
\end{table}

\subsection{Hardware}
\label{sec:exp-hw}

We further validated SCRIM on hardware, demonstrating that it can be feasibly learned through limited real-world interaction. %
The study investigated two tasks using Franka Panda manipulator for which demonstration data was collected via teleoperation: \emph{StackCube} (stack the red cube on the green cube; $70$
demonstrations) and \emph{TapeHang} (hang the tape on a hook; $100$ demonstrations).
The base policy for each task was obtained by finetuning $\pi_{0.5}$~\citep{intelligence2025pi_}, a vision-language-action model. The adversarial residual RL agent is trained with SAC~\cite{haarnoja18sac} directly on the robot, adding a bounded disturbance $\delta$ to the base-policy action (implementation details in Appendix~\ref{sec:appendix-hw-policy}).
Adversarial training was found to be effective with only $50$ rollouts for
StackCube and $150$ for TapeHang, indicating our method's effectiveness for real-world RL. %

\begin{wraptable}{r}{0.5\linewidth}
  \centering
  \small
  \setlength{\tabcolsep}{3pt}
  \renewcommand{\arraystretch}{0.9}
  \caption{\textbf{Hardware.} Success (\%) of the fine-tuned $\pi_{0.5}$ over
    $25$ rollouts. Residual-RL training episodes in parentheses.}
  \label{tab:results_hardware}
  \begin{tabular}{llcc}
    \toprule
    Data & Method & StackCube & TapeHang \\
    \midrule
    \multirow{3}{*}{100\%}
      & Uniform (BC)          & $56.0$ & $56.0$ \\
      & Residual RL~\cite{yuan2024policy} & $60.0$ ($50$) & $4.0$ ($150$) \\
      & \textbf{Ours}         & $\mathbf{76.0}$ & $\mathbf{88.0}$ \\
    \midrule
    \multirow{2}{*}{25\%}
      & Uniform sampling      & $52.0$ & $48.0$ \\
      & \textbf{Ours}         & $\mathbf{68.0}$ & $\mathbf{60.0}$ \\
    \bottomrule
  \end{tabular}
  \vspace{-8pt}
\end{wraptable}

\noindent \textbf{Weighted imitation and curation on hardware.}
We repeat the weighting and $25\%$ curation studies with the hardware SCRIM scores, re-finetuning $\pi_{0.5}$ on each resulting dataset and evaluating $25$ rollouts per task--policy pair (Table~\ref{tab:results_hardware}).
SCRIM-weighted training raises success from $56\%$ to $76\%$ on StackCube and from $56\%$ to $88\%$ on TapeHang. Residual RL~\citep{yuan2024policy} given the same number of real-world episodes as the adversary barely improves StackCube and severely degrades TapeHang, mirroring the matched-interaction results in simulation (Table~\ref{tab:residual_rl_matched}).
With a $25\%$ subset, SCRIM selection again outperforms uniform selection on both tasks, showing that SCRIM's benefits carry over from state-based simulated policies to a real-world VLA.

\noindent \textbf{Mean trajectory criticality flags fragile demonstrations.}
Because criticality is comparable across task trajectories, a demonstration's mean SCRIM grades its quality. A high mean value indicates a fragile success that a small
execution error would have turned into a failure, while a low mean marks a robust
one. Fig.~\ref{fig:hw_trajectory_quality} shows two such cube-stacking demonstrations: the high-mean one places the cube on the edge
of the support, where it barely balances and a slight deviation would topple it, whereas the low-mean one sets it down centered and stable.

\begin{figure}[H]
  \centering
\includegraphics[width=\textwidth]{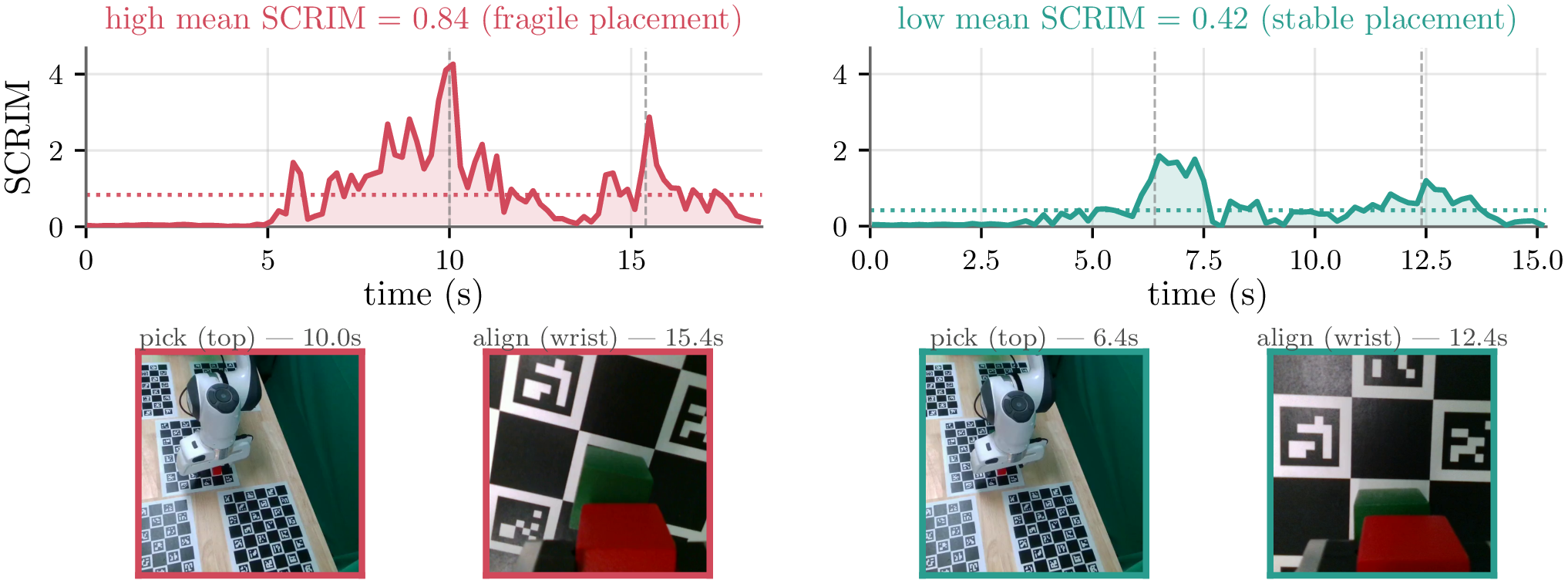}
  \caption{SCRIM distinguishes a brittle (left) from a robust (right) cube-stacking demonstration.}
\label{fig:hw_trajectory_quality}
\end{figure}

\section{Limitations and Future Work}
\label{sec:conclusion}
SCRIM estimates criticality without requiring expert supervision, but it still requires interaction with the environment. Although this is more targeted than training a full residual RL policy, it may still be difficult in real-world settings where failures are costly. Future work could address this by learning criticality in simulation, using real-to-sim reconstruction from demonstration, or
from logged failure data with offline RL.
SCRIM also improves imitation only within the support of the available demonstrations. It can identify and emphasize consequential states, but it cannot discover missing strategies or recovery behaviors absent from the dataset. An important direction is to use criticality to guide exploration or targeted data collection, focusing interactions near high-criticality states.
Finally, learning language-conditioned critics that generalize across objects, tasks, and embodiments is an exciting direction. Our faucet and hardware experiments suggest this may be feasible, since SCRIM can already be trained across task variations and with task instructions included in the observation.

\clearpage
\acknowledgments{We gratefully acknowledge research support from the NSF CAREER program (2240163) and the NASA ULI program. Ajay Suresha Sathya was supported by the European Union’s
Horizon Europe research and innovation programme
through a Marie Skłodowska-Curie Postdoctoral Fellowship
(101211945 — ExTRAORDiNary).}

\bibliography{manuscript}

\begin{thebibliography}{33}
\providecommand{\natexlab}[1]{#1}
\providecommand{\url}[1]{\texttt{#1}}
\expandafter\ifx\csname urlstyle\endcsname\relax
  \providecommand{\doi}[1]{doi: #1}\else
  \providecommand{\doi}{doi: \begingroup \urlstyle{rm}\Url}\fi

\bibitem[O'Neill et~al.(2024)O'Neill, Rehman, Maddukuri, Gupta, Padalkar, Lee,
  Pooley, Gupta, Mandlekar, Jain, et~al.]{ox2024open}
A.~O'Neill, A.~Rehman, A.~Maddukuri, A.~Gupta, A.~Padalkar, A.~Lee, A.~Pooley,
  A.~Gupta, A.~Mandlekar, A.~Jain, et~al.
\newblock Open {X-Embodiment}: Robotic learning datasets and {RT-X} models.
\newblock In \emph{2024 IEEE International Conference on Robotics and
  Automation (ICRA)}, pages 6892--6903. IEEE, 2024.

\bibitem[Khazatsky et~al.(2024)Khazatsky, Pertsch, Nair, Balakrishna, Dasari,
  Karamcheti, Nasiriany, Srirama, Chen, Ellis, Fagan, Hejna, Itkina, Lepert,
  Ma, Miller, Wu, Belkhale, Dass, Ha, Jain, Lee, Lee, Memmel, Park,
  Radosavovic, Wang, Zhan, Black, Chi, Hatch, Lin, Lu, Mercat, Rehman, Sanketi,
  Sharma, Simpson, Vuong, Walke, Wulfe, Xiao, Yang, Yavary, Zhao, Agia, Baijal,
  Castro, Chen, Chen, Chung, Drake, Foster, Gao, Herrera, Heo, Hsu, Hu,
  Jackson, Le, Li, Lin, Ma, Maddukuri, Mirchandani, Morton, Nguyen, O'Neill,
  Scalise, Seale, Son, Tian, Tran, Wang, Wu, Xie, Yang, Yin, Zhang, Bastani,
  Berseth, Bohg, Goldberg, Gupta, Gupta, Jayaraman, Lim, Malik,
  Martín-Martín, Ramamoorthy, Sadigh, Song, Wu, Yip, Zhu, Kollar, Levine, and
  Finn]{Khazatsky-RSS-24}
A.~Khazatsky, K.~Pertsch, S.~Nair, A.~Balakrishna, S.~Dasari, S.~Karamcheti,
  S.~Nasiriany, M.~K. Srirama, L.~Y. Chen, K.~Ellis, P.~D. Fagan, J.~Hejna,
  M.~Itkina, M.~Lepert, Y.~J. Ma, P.~T. Miller, J.~Wu, S.~Belkhale, S.~Dass,
  H.~Ha, A.~Jain, A.~Lee, Y.~Lee, M.~Memmel, S.~Park, I.~Radosavovic, K.~Wang,
  A.~Zhan, K.~Black, C.~Chi, K.~B. Hatch, S.~Lin, J.~Lu, J.~Mercat, A.~Rehman,
  P.~R. Sanketi, A.~Sharma, C.~Simpson, Q.~Vuong, H.~R. Walke, B.~Wulfe,
  T.~Xiao, J.~H. Yang, A.~Yavary, T.~Z. Zhao, C.~Agia, R.~Baijal, M.~G. Castro,
  D.~Chen, Q.~Chen, T.~Chung, J.~Drake, E.~P. Foster, J.~Gao, D.~A. Herrera,
  M.~Heo, K.~Hsu, J.~Hu, D.~Jackson, C.~Le, Y.~Li, R.~Lin, Z.~Ma, A.~Maddukuri,
  S.~Mirchandani, D.~Morton, T.~Nguyen, A.~O'Neill, R.~Scalise, D.~Seale,
  V.~Son, S.~Tian, E.~Tran, A.~E. Wang, Y.~Wu, A.~Xie, J.~Yang, P.~Yin,
  Y.~Zhang, O.~Bastani, G.~Berseth, J.~Bohg, K.~Goldberg, A.~Gupta, A.~Gupta,
  D.~Jayaraman, J.~J. Lim, J.~Malik, R.~Martín-Martín, S.~Ramamoorthy,
  D.~Sadigh, S.~Song, J.~Wu, M.~C. Yip, Y.~Zhu, T.~Kollar, S.~Levine, and
  C.~Finn.
\newblock {DROID: A Large-Scale In-The-Wild Robot Manipulation Dataset}.
\newblock In \emph{Proceedings of Robotics: Science and Systems}, Delft,
  Netherlands, July 2024.
\newblock \doi{10.15607/RSS.2024.XX.120}.

\bibitem[Chi et~al.(2023)Chi, Feng, Du, Xu, Cousineau, Burchfiel, and
  Song]{Chi-RSS-23}
C.~Chi, S.~Feng, Y.~Du, Z.~Xu, E.~Cousineau, B.~C. Burchfiel, and S.~Song.
\newblock {Diffusion Policy: Visuomotor Policy Learning via Action Diffusion}.
\newblock In \emph{Proceedings of Robotics: Science and Systems}, Daegu,
  Republic of Korea, July 2023.
\newblock \doi{10.15607/RSS.2023.XIX.026}.

\bibitem[Zitkovich et~al.(2023)Zitkovich, Yu, Xu, Xu, Xiao, Xia, Wu, Wohlhart,
  Welker, Wahid, Vuong, Vanhoucke, Tran, Soricut, Singh, Singh, Sermanet,
  Sanketi, Salazar, Ryoo, Reymann, Rao, Pertsch, Mordatch, Michalewski, Lu,
  Levine, Lee, Lee, Leal, Kuang, Kalashnikov, Julian, Joshi, Irpan, Ichter,
  Hsu, Herzog, Hausman, Gopalakrishnan, Fu, Florence, Finn, Dubey, Driess,
  Ding, Choromanski, Chen, Chebotar, Carbajal, Brown, Brohan, Arenas, and
  Han]{zitkovich23a}
B.~Zitkovich, T.~Yu, S.~Xu, P.~Xu, T.~Xiao, F.~Xia, J.~Wu, P.~Wohlhart,
  S.~Welker, A.~Wahid, Q.~Vuong, V.~Vanhoucke, H.~Tran, R.~Soricut, A.~Singh,
  J.~Singh, P.~Sermanet, P.~R. Sanketi, G.~Salazar, M.~S. Ryoo, K.~Reymann,
  K.~Rao, K.~Pertsch, I.~Mordatch, H.~Michalewski, Y.~Lu, S.~Levine, L.~Lee,
  T.-W.~E. Lee, I.~Leal, Y.~Kuang, D.~Kalashnikov, R.~Julian, N.~J. Joshi,
  A.~Irpan, B.~Ichter, J.~Hsu, A.~Herzog, K.~Hausman, K.~Gopalakrishnan, C.~Fu,
  P.~Florence, C.~Finn, K.~A. Dubey, D.~Driess, T.~Ding, K.~M. Choromanski,
  X.~Chen, Y.~Chebotar, J.~Carbajal, N.~Brown, A.~Brohan, M.~G. Arenas, and
  K.~Han.
\newblock Rt-2: Vision-language-action models transfer web knowledge to robotic
  control.
\newblock In J.~Tan, M.~Toussaint, and K.~Darvish, editors, \emph{Proceedings
  of The 7th Conference on Robot Learning}, volume 229 of \emph{Proceedings of
  Machine Learning Research}, pages 2165--2183. PMLR, 06--09 Nov 2023.
\newblock URL \url{https://proceedings.mlr.press/v229/zitkovich23a.html}.

\bibitem[Zhao et~al.(2023)Zhao, Kumar, Levine, and Finn]{Zhao-RSS-23}
T.~Z. Zhao, V.~Kumar, S.~Levine, and C.~Finn.
\newblock {Learning Fine-Grained Bimanual Manipulation with Low-Cost Hardware}.
\newblock In \emph{Proceedings of Robotics: Science and Systems}, Daegu,
  Republic of Korea, July 2023.
\newblock \doi{10.15607/RSS.2023.XIX.016}.

\bibitem[Black et~al.(2025)Black, Brown, Darpinian, Dhabalia, Driess, Esmail,
  Equi, Finn, Fusai, et~al.]{intelligence2025pi_}
K.~Black, N.~Brown, J.~Darpinian, K.~Dhabalia, D.~Driess, A.~Esmail, M.~Equi,
  C.~Finn, N.~Fusai, et~al.
\newblock $\pi_{0.5}$: a vision-language-action model with open-world
  generalization.
\newblock In \emph{Proceedings of The 9th Conference on Robot Learning}, volume
  305 of \emph{Proceedings of Machine Learning Research}, pages 17--40. PMLR,
  2025.

\bibitem[Brogliato(1996)]{brogliato1996nonsmooth}
B.~Brogliato.
\newblock \emph{Nonsmooth impact mechanics: models, dynamics and control}.
\newblock Springer, 1996.

\bibitem[Belkhale et~al.(2023)Belkhale, Cui, and Sadigh]{imitationdataquality}
S.~Belkhale, Y.~Cui, and D.~Sadigh.
\newblock Data quality in imitation learning.
\newblock In A.~Oh, T.~Naumann, A.~Globerson, K.~Saenko, M.~Hardt, and
  S.~Levine, editors, \emph{Advances in Neural Information Processing Systems},
  volume~36, pages 80375--80395. Curran Associates, Inc., 2023.
\newblock URL
  \url{https://proceedings.neurips.cc/paper_files/paper/2023/file/fe692980c5d9732cf153ce27947653a7-Paper-Conference.pdf}.

\bibitem[Johannink et~al.(2019)Johannink, Bahl, Nair, Luo, Kumar, Loskyll,
  Ojea, Solowjow, and Levine]{johannink2019residual}
T.~Johannink, S.~Bahl, A.~Nair, J.~Luo, A.~Kumar, M.~Loskyll, J.~A. Ojea,
  E.~Solowjow, and S.~Levine.
\newblock Residual reinforcement learning for robot control.
\newblock In \emph{2019 international conference on robotics and automation
  (ICRA)}, pages 6023--6029. IEEE, 2019.

\bibitem[Ankile et~al.(2025)Ankile, Jiang, Duan, Shi, Abbeel, and
  Nagabandi]{ankile2025residual}
L.~Ankile, Z.~Jiang, R.~Duan, G.~Shi, P.~Abbeel, and A.~Nagabandi.
\newblock Residual off-policy {RL} for finetuning behavior cloning policies.
\newblock \emph{arXiv preprint arXiv:2509.19301}, 2025.

\bibitem[Yuan et~al.(2025)Yuan, Mu, Tao, Fang, Zhang, and Su]{yuan2024policy}
X.~Yuan, T.~Mu, S.~Tao, Y.~Fang, M.~Zhang, and H.~Su.
\newblock Policy decorator: Model-agnostic online refinement for large policy
  model.
\newblock In \emph{International Conference on Learning Representations
  (ICLR)}, 2025.

\bibitem[Ding et~al.(2025)Ding, Veer, Leung, Cao, and
  Pavone]{Ding2025SurprisePA}
W.~Ding, S.~Veer, K.~Leung, Y.~Cao, and M.~Pavone.
\newblock Surprise potential as a measure of interactivity in driving
  scenarios.
\newblock \emph{arXiv preprint arXiv:2502.05677}, 2025.

\bibitem[Huang et~al.(2018)Huang, Bhatia, Abbeel, and
  Dragan]{Huang2018EstablishingAT}
S.~H. Huang, K.~Bhatia, P.~Abbeel, and A.~D. Dragan.
\newblock Establishing appropriate trust via critical states.
\newblock In \emph{2018 IEEE/RSJ International Conference on Intelligent Robots
  and Systems (IROS)}, pages 3929--3936, 2018.
\newblock \doi{10.1109/IROS.2018.8593649}.

\bibitem[Hejna et~al.(2025)Hejna, Bhateja, Jiang, Pertsch, and
  Sadigh]{remixdata}
J.~Hejna, C.~A. Bhateja, Y.~Jiang, K.~Pertsch, and D.~Sadigh.
\newblock Remix: Optimizing data mixtures for large scale imitation learning.
\newblock In P.~Agrawal, O.~Kroemer, and W.~Burgard, editors, \emph{Proceedings
  of The 8th Conference on Robot Learning}, volume 270 of \emph{Proceedings of
  Machine Learning Research}, pages 145--164. PMLR, 06--09 Nov 2025.
\newblock URL \url{https://proceedings.mlr.press/v270/hejna25a.html}.

\bibitem[Parekh et~al.(2026)Parekh, Nemlekar, and Losey]{Parekh2025TowardsBB}
S.~Parekh, H.~Nemlekar, and D.~P. Losey.
\newblock Towards balanced behavior cloning from imbalanced datasets.
\newblock \emph{Autonomous Robots}, 50\penalty0 (1), 2026.
\newblock \doi{10.1007/s10514-025-10237-0}.

\bibitem[Agia et~al.(2025)Agia, Sinha, Yang, Antonova, Pavone, Nishimura,
  Itkina, and Bohg]{Agia2025CUPIDCD}
C.~Agia, R.~Sinha, J.~Yang, R.~Antonova, M.~Pavone, H.~Nishimura, M.~Itkina,
  and J.~Bohg.
\newblock {CUPID}: Curating data your robot loves with influence functions.
\newblock In \emph{Proceedings of The 9th Conference on Robot Learning}, volume
  305 of \emph{Proceedings of Machine Learning Research}, pages 2907--2932.
  PMLR, 2025.

\bibitem[Dass et~al.(2026)Dass, Khaddaj, Engstrom, Madry, Ilyas, and
  Mart{\'i}n-Mart{\'i}n]{Dass2025DataMILSD}
S.~Dass, A.~Khaddaj, L.~Engstrom, A.~Madry, A.~Ilyas, and
  R.~Mart{\'i}n-Mart{\'i}n.
\newblock {DataMIL}: Selecting data for robot imitation learning with
  datamodels.
\newblock In \emph{The Fourteenth International Conference on Learning
  Representations (ICLR)}, 2026.

\bibitem[Kuhar et~al.(2023)Kuhar, Cheng, Chopra, Bronars, and
  Xu]{pmlr-v229-kuhar23a}
S.~Kuhar, S.~Cheng, S.~Chopra, M.~Bronars, and D.~Xu.
\newblock Learning to discern: Imitating heterogeneous human demonstrations
  with preference and representation learning.
\newblock In J.~Tan, M.~Toussaint, and K.~Darvish, editors, \emph{Proceedings
  of The 7th Conference on Robot Learning}, volume 229 of \emph{Proceedings of
  Machine Learning Research}, pages 1437--1449. PMLR, 06--09 Nov 2023.
\newblock URL \url{https://proceedings.mlr.press/v229/kuhar23a.html}.

\bibitem[Hejna et~al.(2025)Hejna, Mirchandani, Balakrishna, Xie, Wahid,
  Tompson, Sanketi, Shah, Devin, and Sadigh]{Hejna2025RobotDC}
J.~Hejna, S.~Mirchandani, A.~Balakrishna, A.~Xie, A.~Wahid, J.~Tompson, P.~R.
  Sanketi, D.~Shah, C.~Devin, and D.~Sadigh.
\newblock Robot data curation with mutual information estimators.
\newblock In \emph{Proceedings of Robotics: Science and Systems}, 2025.
\newblock \doi{10.15607/RSS.2025.XXI.023}.

\bibitem[Chen et~al.(2025)Chen, Lessing, Liu, and Finn]{Chen2025CuratingDU}
A.~S. Chen, A.~M. Lessing, Y.~Liu, and C.~Finn.
\newblock Curating demonstrations using online experience.
\newblock In \emph{Proceedings of Robotics: Science and Systems}, 2025.
\newblock \doi{10.15607/RSS.2025.XXI.071}.

\bibitem[Ciftci et~al.(2025)Ciftci, Chiu, Feng, Sukhatme, and
  Bansal]{Ciftci2024SAFEGILSG}
Y.~U. Ciftci, D.~Chiu, Z.~Feng, G.~S. Sukhatme, and S.~Bansal.
\newblock {SAFE-GIL}: {SAFE}ty guided imitation learning for robotic systems.
\newblock In \emph{2025 IEEE International Conference on Robotics and
  Automation (ICRA)}, pages 3559--3566, 2025.
\newblock \doi{10.1109/ICRA55743.2025.11128298}.

\bibitem[Qiu et~al.(2026)Qiu, Ciftci, and Bansal]{Qiu2025SafetyAwareIL}
L.~Qiu, Y.~U. Ciftci, and S.~Bansal.
\newblock Safety-aware imitation learning via {MPC}-guided disturbance
  injection.
\newblock \emph{IEEE Robotics and Automation Letters}, 11\penalty0
  (6):\penalty0 7222--7229, 2026.
\newblock \doi{10.1109/LRA.2026.3685942}.

\bibitem[Brandfonbrener et~al.(2023)Brandfonbrener, Tu, Singh, Welker, Boodoo,
  Matni, and Varley]{Brandfonbrener2022VisualBT}
D.~Brandfonbrener, S.~Tu, A.~Singh, S.~Welker, C.~Boodoo, N.~Matni, and
  J.~Varley.
\newblock Visual backtracking teleoperation: A data collection protocol for
  offline image-based reinforcement learning.
\newblock In \emph{2023 IEEE International Conference on Robotics and
  Automation (ICRA)}, pages 11336--11342, 2023.
\newblock \doi{10.1109/ICRA48891.2023.10161096}.

\bibitem[Sindhwani et~al.(2018)Sindhwani, Tu, and
  Khansari]{Sindhwani2018LearningCV}
V.~Sindhwani, S.~Tu, and M.~Khansari.
\newblock Learning contracting vector fields for stable imitation learning.
\newblock \emph{ArXiv}, abs/1804.04878, 2018.
\newblock URL \url{https://api.semanticscholar.org/CorpusID:4861590}.

\bibitem[Mehta et~al.(2025)Mehta, Ciftci, Ramachandran, Bansal, and
  Losey]{Mehta2024StableBCCC}
S.~A. Mehta, Y.~U. Ciftci, B.~Ramachandran, S.~Bansal, and D.~P. Losey.
\newblock Stable-{BC}: Controlling covariate shift with stable behavior
  cloning.
\newblock \emph{IEEE Robotics and Automation Letters}, 10\penalty0
  (2):\penalty0 1952--1959, 2025.
\newblock \doi{10.1109/LRA.2025.3526439}.

\bibitem[Xiao et~al.(2026)Xiao, Lin, Peng, Xue, He, Luo, Xie, Hu, Fan, Shi, and
  Zhu]{vla_res_rl}
W.~Xiao, H.~Lin, A.~Peng, H.~Xue, T.~He, Z.~Luo, Y.~Xie, F.~Hu, L.~Fan, G.~Shi,
  and Y.~Zhu.
\newblock Self-improving vision-language-action models with data generation via
  residual {RL}.
\newblock In \emph{The Fourteenth International Conference on Learning
  Representations (ICLR)}, 2026.
\newblock URL \url{https://openreview.net/forum?id=eUGoqrZ6Ea}.

\bibitem[Ankile et~al.(2025)Ankile, Simeonov, Shenfeld, Torne, and
  Agrawal]{Ankile2024FromIT}
L.~Ankile, A.~Simeonov, I.~Shenfeld, M.~Torne, and P.~Agrawal.
\newblock From imitation to refinement -- residual {RL} for precise assembly.
\newblock In \emph{2025 IEEE International Conference on Robotics and
  Automation (ICRA)}, 2025.
\newblock \doi{10.1109/ICRA55743.2025.11127442}.

\bibitem[Yu et~al.(2020)Yu, Quillen, He, Julian, Hausman, Finn, and
  Levine]{yu2020meta}
T.~Yu, D.~Quillen, Z.~He, R.~Julian, K.~Hausman, C.~Finn, and S.~Levine.
\newblock Meta-world: A benchmark and evaluation for multi-task and meta
  reinforcement learning.
\newblock In \emph{Conference on robot learning}, pages 1094--1100. PMLR, 2020.

\bibitem[Gu et~al.(2023)Gu, Xiang, Li, Ling, Liu, Mu, Tang, Tao, Wei, Yao,
  et~al.]{gumaniskill2}
J.~Gu, F.~Xiang, X.~Li, Z.~Ling, X.~Liu, T.~Mu, Y.~Tang, S.~Tao, X.~Wei,
  Y.~Yao, et~al.
\newblock {ManiSkill2}: A unified benchmark for generalizable manipulation
  skills.
\newblock In \emph{The Eleventh International Conference on Learning
  Representations (ICLR)}, 2023.

\bibitem[Shafiullah et~al.(2022)Shafiullah, Cui, Altanzaya, and
  Pinto]{shafiullah2022behavior}
N.~M. Shafiullah, Z.~Cui, A.~A. Altanzaya, and L.~Pinto.
\newblock Behavior transformers: Cloning $k$ modes with one stone.
\newblock In \emph{Advances in Neural Information Processing Systems},
  volume~35, pages 22955--22968, 2022.

\bibitem[Karino et~al.(2020)Karino, Ohmura, and
  Kuniyoshi]{Karino2020Identifying}
I.~Karino, Y.~Ohmura, and Y.~Kuniyoshi.
\newblock Identifying critical states by the action-based variance of expected
  return.
\newblock In \emph{Artificial Neural Networks and Machine Learning -- ICANN
  2020}, volume 12396 of \emph{Lecture Notes in Computer Science}, pages
  366--378. Springer, 2020.
\newblock \doi{10.1007/978-3-030-61609-0_29}.

\bibitem[Haarnoja et~al.(2018)Haarnoja, Zhou, Abbeel, and
  Levine]{haarnoja18sac}
T.~Haarnoja, A.~Zhou, P.~Abbeel, and S.~Levine.
\newblock Soft actor-critic: Off-policy maximum entropy deep reinforcement
  learning with a stochastic actor.
\newblock In J.~Dy and A.~Krause, editors, \emph{Proceedings of the 35th
  International Conference on Machine Learning}, volume~80 of \emph{Proceedings
  of Machine Learning Research}, pages 1861--1870. PMLR, 10--15 Jul 2018.
\newblock URL \url{https://proceedings.mlr.press/v80/haarnoja18b.html}.

\bibitem[Kakade and Langford(2002)]{kakade2002approximately}
S.~Kakade and J.~Langford.
\newblock Approximately optimal approximate reinforcement learning.
\newblock In \emph{Proceedings of the Nineteenth International Conference on
  Machine Learning (ICML)}, pages 267--274, 2002.

\end{thebibliography}

\clearpage
\appendix

\section{Appendix}
\label{sec:appendix}

\subsection{SCRIM algorithm}\label{sec:appendix-algorithm}
Algorithm~\ref{alg:scrim} summarizes SCRIM end to end. Phase~1 trains an
adversarial disturbance policy $\mu_\theta$ and a value critic in the residual
MDP $\widetilde{\mathcal{M}}^{\,\epsilon}_{\pi}$ with soft actor-critic
(SAC)~\cite{haarnoja18sac}; Phase~2 uses the converged conservative critic
$\bar{Q}$ and the deterministic adversary $\delta_\pi$ to score every
demonstration state with the criticality estimator $\Delta Q(s)$
(Eq.~\ref{eq:delta-q}). The diffusion/visual base policy, action chunking, and
exploration-scheduling details of any particular instantiation are abstracted
away; the base controller is simply the frozen imitation policy $\pi$. The
returned scores feed the sampling-distribution calibration of
Appendix~\ref{sec:appendix-calibration}.

\begin{algorithm}[H]
\caption{SCRIM (State CRIticality Measure)}
\label{alg:scrim}
\begin{algorithmic}[1]
\Require base imitation policy $\pi$, per-coordinate budget $\epsilon$, demonstrations $\mathcal{D}=\{(s,\pi_E(s))\}$, discount $\gamma$, Polyak rate $\tau$
\Ensure criticality scores $\{\Delta Q(s)\}_{s\in\mathcal{D}}$
\Statex \emph{// Phase 1: train adversary and critic in $\widetilde{\mathcal{M}}^{\,\epsilon}_{\pi}$ with SAC}
\State initialize twin critics $Q_{\phi_1},Q_{\phi_2}$, targets $\bar{\phi_i}\!\leftarrow\!\phi_i$, adversary $\mu_\theta$, temperature $\alpha$, replay buffer $\mathcal{B}$
\Statex \hspace{\algorithmicindent} with disturbance policy $\pi_{\mathrm{adv}}(\cdot\mid s)$, mean $\delta_\theta(s)=\epsilon\odot\tanh(\mu_\theta(s))\in\Delta_\epsilon$
\Repeat
  \State sample $\delta\sim\mu_\theta(\cdot\mid s)$; execute $a=\operatorname{clip}_{\mathcal{A}}\big(\pi(s)+\delta\big)$; observe next state $s'$
  \State reward $r\leftarrow -1$ if the rollout succeeds, else $0$; store $(s,\delta,r,s',d)$ in $\mathcal{B}$
  \State sample a minibatch from $\mathcal{B}$; draw $\delta'\sim\mu_\theta(\cdot\mid s')$
  \State $y \leftarrow r + \gamma\,(1-d)\big[\,\min_i Q_{\bar{\phi_i}}\big(s',\pi(s')+\delta'\big) - \alpha\log\pi_{\mathrm{adv}}(\delta'\mid s')\,\big]$
  \State update critics: minimize $\textstyle\sum_i\big(Q_{\phi_i}\big(s,\pi(s)+\delta\big) - y\big)^2$
  \State update adversary $\mu_\theta$: maximize $\min_i Q_{\phi_i}\big(s,\pi(s)+\delta_\theta(s)\big) - \alpha\log\pi_{\mathrm{adv}}(\delta_\theta(s)\mid s)$
  \State update temperature $\alpha$ toward the target entropy
  \State Polyak update targets: $\bar{\phi_i}\leftarrow\tau\phi_i + (1-\tau)\bar{\phi_i}$
\Until{converged}
\State set conservative critic $\bar{Q}\leftarrow\min_i Q_{\bar{\phi_i}}$ and deterministic adversary $\delta_\pi(s)\leftarrow\epsilon\odot\tanh(\mu_\theta(s))$
\Statex \emph{// Phase 2: score demonstration states around the expert action}
\For{each $(s,\pi_E(s))\in\mathcal{D}$}
  \State $\Delta Q(s)\leftarrow \bar{Q}\big(s,\pi_E(s)+\delta_\pi(s)\big) - \bar{Q}\big(s,\pi_E(s)\big)$ \Comment{Eq.~\ref{eq:delta-q}}
\EndFor
\State \Return $\{\Delta Q(s)\}$
\end{algorithmic}
\end{algorithm}

\subsection{Theoretical motivation for SCRIM}\label{sec:appendix-theory}
This section motivates three design choices of SCRIM: criticality is measured
on demonstration states, around the expert action, and with the critic of the
learned policy $\pi$.

\vspace{0.1cm} \noindent \textbf{Setup.}
Let $\pi_E$ and $\pi$ be deterministic. So that larger values are better, in
this section we use the success reward $r^{+} \coloneqq -r$ and write $J(\pi)$
and $Q^{\pi}(s, a)$ for the return and action-value (Eq.~\ref{eq:q-function})
of $\pi$ under $r^{+}$. Let
$d_E(s) = (1-\gamma) \sum\nolimits_{t \ge 0} \gamma^t \Pr(s_t = s \mid \pi_E)$ be
the discounted state distribution of the expert. We define the criticality of
a state as
\begin{equation}
  C_\epsilon(s)
    = \max_{\|u\| \le \epsilon}
      \Big[ Q^{\pi}\big(s, \pi_E(s)\big) - Q^{\pi}\big(s, \pi_E(s) + u\big) \Big] ,
  \label{eq:app-criticality}
\end{equation}
the worst-case drop in the learner's value that an in-budget error around the
expert action can cause. Here $\|\cdot\|$ is the max norm, so that the
constraint is the budget set $\Delta_\epsilon$ of
Eq.~\ref{eq:residual-action} with a shared budget $\epsilon$.

\begin{theorem}[Criticality-weighted performance bound]\label{thm:criticality-bound}
Assume that $Q^{\pi}(s,\cdot)$ is smooth around $\pi_E(s)$, with a
second-order Taylor remainder $O(\|a - \pi_E(s)\|^3)$ uniformly in $s$; that
$\pi_E(s)$ is an interior local maximizer of $Q^{\pi}(s,\cdot)$; and that
$\|\pi(s) - \pi_E(s)\| \le \eta$ for all $s$. Then
\begin{equation}
  J(\pi_E) - J(\pi)
    \;\le\; \frac{1}{(1-\gamma)\,\epsilon^2}\,
      \mathbb{E}_{s \sim d_E}\Big[
      C_\epsilon(s)\, \big\| \pi(s) - \pi_E(s) \big\|^2 \Big]
      + O\big( \epsilon\eta^2 + \eta^3 \big) .
  \label{eq:app-gap}
\end{equation}
\end{theorem}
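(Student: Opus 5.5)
The plan is to combine the performance difference lemma of \citet{kakade2002approximately} with a second-order expansion of $Q^{\pi}(s,\cdot)$ around the expert action. The expansion is then compared against the worst-case drop that defines $C_\epsilon(s)$ in Eq.~\eqref{eq:app-criticality}.

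\textbf{Step 1 (performance difference).} Both policies are deterministic, so $V^{\pi}(s)=Q^{\pi}(s,\pi(s))$. Applying the performance difference lemma with the expert as the ``new'' policy and $\pi$ as the reference gives
\begin{equation*}
  J(\pi_E)-J(\pi)=\frac{1}{1-\gamma}\,\mathbb{E}_{s\sim d_E}\Big[Q^{\pi}\big(s,\pi_E(s)\big)-Q^{\pi}\big(s,\pi(s)\big)\Big].
\end{equation*}
This is exactly why the bound is stated on $d_E$, with the critic of $\pi$ and around $\pi_E(s)$. It therefore remains to bound the per-state gap $g(s)\coloneqq Q^{\pi}(s,\pi_E(s))-Q^{\pi}(s,\pi_E(s)+v)$, where $v\coloneqq\pi(s)-\pi_E(s)$ and $\|v\|\le\eta$.

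\textbf{Step 2 (local quadratic form).} Since $\pi_E(s)$ is an interior local maximizer, the gradient of $Q^{\pi}(s,\cdot)$ vanishes there. Let $H_s\succeq 0$ denote minus its Hessian. The uniform Taylor assumption then gives, for all small $w$,
\begin{equation*}
  Q^{\pi}\big(s,\pi_E(s)\big)-Q^{\pi}\big(s,\pi_E(s)+w\big)=\tfrac12\, w^\top H_s w+O(\|w\|^3).
\end{equation*}
Taking $w=v$ yields $g(s)=\tfrac12 v^\top H_s v+O(\eta^3)$.

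\textbf{Step 3 (compare with the criticality).} If $v=0$, then $g(s)=O(\eta^3)$ trivially. Otherwise, rescale to $u=\epsilon v/\|v\|$, which satisfies $\|u\|=\epsilon$ and is therefore feasible in Eq.~\eqref{eq:app-criticality}. Interiority is used here, so that $\pi_E(s)+u$ is a legal action. Applying Step 2 to $u$ and using the definition of the maximum,
\begin{equation*}
  \tfrac12 u^\top H_s u \le C_\epsilon(s)+O(\epsilon^3).
\end{equation*}
Homogeneity of the quadratic form gives $\tfrac12 v^\top H_s v=\frac{\|v\|^2}{\epsilon^2}\,\tfrac12 u^\top H_s u$. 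Hence
\begin{equation*}
  g(s)\le \frac{C_\epsilon(s)}{\epsilon^2}\,\|v\|^2+O(\epsilon\|v\|^2)+O(\eta^3)\le \frac{C_\epsilon(s)}{\epsilon^2}\,\|\pi(s)-\pi_E(s)\|^2+O(\epsilon\eta^2+\eta^3).
\end{equation*}
Substituting this into Step 1 and averaging over $d_E$ gives Eq.~\eqref{eq:app-gap}. The remainder also picks up the factor $1/(1-\gamma)$, which is absorbed into the $O(\cdot)$ with the other constants.

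\textbf{Main obstacle.} The delicate point is the remainder bookkeeping in Step 3. The Taylor remainder must be uniform in $s$, and it must be applied at radius $\epsilon$ for the criticality and at radius $\eta$ for the actual error. Dividing the $O(\epsilon^3)$ slack by $\epsilon^2$ and multiplying by $\|v\|^2$ is what produces the $\epsilon\eta^2$ term rather than something worse. The norm choice also has to be consistent. The max-norm ball defines $C_\epsilon$, and $v/\|v\|$ must be normalized in that same max norm, so that $u$ lies on the boundary of $\Delta_\epsilon$. The rescaling argument needs nothing beyond this, because it only uses one feasible direction and never requires identifying the maximizer.
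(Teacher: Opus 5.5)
Your proposal is correct and follows the paper's proof. Both use the performance difference lemma, a second-order Taylor expansion in which the gradient term vanishes, and homogeneity of the quadratic form $q_s$ to compare with $C_\epsilon(s)$ at radius $\epsilon$. The only cosmetic difference is how the rescaling is done: you rescale a single feasible boundary direction $u=\epsilon v/\|v\|$, whereas the paper bounds $q_s(\delta_a(s))$ by the maximum of $q_s$ over the ball of radius $\|\delta_a(s)\|$ and rescales the whole ball. The two give the same $O(\epsilon\eta^2+\eta^3)$ remainder.
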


\begin{proof}
For each state $s$, let $a^* = \pi_E(s)$, $a_\pi = \pi(s)$, and
$\delta_a(s) = a_\pi - a^*$. By the performance difference
lemma~\citep{kakade2002approximately},
\begin{equation*}
  J(\pi_E) - J(\pi)
    = \frac{1}{1-\gamma}\, \mathbb{E}_{s \sim d_E}\big[
        Q^{\pi}(s, a^*) - Q^{\pi}(s, a_\pi) \big] .
\end{equation*}
A Taylor expansion of $Q^{\pi}(s,\cdot)$ around $a^*$ gives
\begin{equation*}
  Q^{\pi}(s, a_\pi)
    = Q^{\pi}(s, a^*)
      + \nabla_a Q^{\pi}(s, a^*)^\top \delta_a(s)
      + \tfrac{1}{2}\, \delta_a(s)^\top \nabla_a^2 Q^{\pi}(s, a^*)\, \delta_a(s)
      + O\big( \|\delta_a(s)\|^3 \big) .
\end{equation*}
The first-order term vanishes because $a^*$ is a local maximizer, and the
remainder is $O(\eta^3)$ because $\|\delta_a(s)\| \le \eta$. With
$q_s(v) \coloneqq \tfrac{1}{2}\, v^\top \big( -\nabla_a^2 Q^{\pi}(s, a^*) \big) v$,
\begin{align*}
  Q^{\pi}(s, a^*) - Q^{\pi}(s, a_\pi)
    &= q_s\big(\delta_a(s)\big) + O\big( \eta^3 \big) \\
    &\le \max_{\|v\| \le \|\delta_a(s)\|} q_s(v) + O\big( \eta^3 \big) \\
    &= \frac{\|\delta_a(s)\|^2}{\epsilon^2} \max_{\|u\| \le \epsilon} q_s(u)
       + O\big( \eta^3 \big) \\
    &\le C_\epsilon(s)\, \frac{\|\delta_a(s)\|^2}{\epsilon^2}
       + O\big( \epsilon\eta^2 + \eta^3 \big) ,
\end{align*}
where the third line rescales $v = (\|\delta_a(s)\| / \epsilon)\, u$. The last
line uses $\max\nolimits_{\|u\| \le \epsilon} q_s(u) \le C_\epsilon(s) + O(\epsilon^3)$,
which follows from the same Taylor expansion, since
$Q^{\pi}(s, a^*) - Q^{\pi}(s, a^* + u) = q_s(u) + O(\epsilon^3)$ for
$\|u\| \le \epsilon$. Taking the expectation over $s \sim d_E$ gives
Eq.~\ref{eq:app-gap}.
\end{proof}

\vspace{0.1cm} \noindent \textbf{Implications for SCRIM.}
The performance difference lemma writes the gap as the learner's value loss at
the expert action, averaged over expert states. SCRIM follows this
decomposition: it scores demonstration states, perturbs around the expert
action $\pi_E(s)$, and uses the critic of the learned policy. This is why the
critic and adversary are trained with $\pi$ as the base controller but
evaluated around $\pi_E(s)$ (Eq.~\ref{eq:delta-q}). Taking the worst case over
the budget makes the bound hold whatever the direction of the imitation error.
Eq.~\ref{eq:app-gap} then weights the imitation error at each state by its
criticality, so errors at critical states cost the most. This motivates
weighting demonstrations by criticality.

\vspace{0.1cm} \noindent \textbf{Scope.}
The theorem concerns the idealized score of Eq.~\ref{eq:app-criticality}, which
uses the exact $Q^{\pi}$ and an exact maximization. SCRIM approximates it:
$\bar{Q}$ is learned with SAC against an adversarial continuation rather than
the unperturbed continuation of $\pi$, and the maximization is done by an adversary
trained around $\pi(s)$ rather than $\pi_E(s)$. The theorem therefore
motivates the design of SCRIM; it does not guarantee the performance of the
trained estimator.

\subsection{Simulation setup and per-environment details}~\label{sec:appendix-sim}
All actions are normalized to $[-1,1]^{d_a}$.
\vspace{0.1cm} \noindent \textbf{MetaWorld (Hammer).}
Observations are $84{\times}84$ RGB images and a $4$-dimensional proprioceptive vector, and the
action is a $4$-dimensional end-effector command. The base policy is a deterministic CNN$+$MLP
trained on $20$ expert demonstrations. The SAC adversary is trained for $5{\times}10^5$ steps with a disturbance bound of $0.20$ per axis.
\vspace{0.1cm} \noindent \textbf{ManiSkill2 (Stack Cube, Turn Faucet, PegInsertion).}
These tasks use state observations and a Behavior Transformer~\cite{shafiullah2022behavior} base
policy with $20$ time-steps of history. StackCube, TurnFaucet, and PegInsertion are trained on
$974$, $1008$, and $986$ demonstrations respectively (Turn Faucet spanning $10$ faucet models).
Table~\ref{tab:app_budget} lists the disturbance bound and the interaction budgets of each task.

\begin{table}[ht]
  \caption{\textbf{Disturbance bounds and interaction budgets.} Environment interactions used
    to train the SCRIM adversary behind Table~1
    baselines.}
  \label{tab:app_budget}
  \centering
  \begin{tabular}{lcccc}
    \toprule
     & StackCube & TurnFaucet & PegInsertion & Hammer \\
    \midrule
    Disturbance bound $\epsilon$ (per axis) & 0.005 & 0.005 & 0.002 & 0.20 \\
    SCRIM adversary & 200k & 100k & 1M & 500k \\
    \bottomrule
  \end{tabular}
\end{table}

\subsection{Sampling-distribution calibration}~\label{sec:appendix-calibration}
This expands the calibration pipeline summarized in Section~\ref{sec:method}.
Denote by $g_i$ the criticality score at transition $i$. Each score is $z$-score
standardized,
$z_i = (g_i - \operatorname{mean}(g)) / (\operatorname{std}(g) + \xi)$ with a small
$\xi > 0$ for numerical stability, and mapped to a distribution at temperature
$T$,
\begin{equation}
  p_i(T) = \frac{\exp(z_i / T)}{\sum_{j=1}^{N} \exp(z_j / T)} .
  \label{eq:softmax}
\end{equation}
Because the softmax is shift and scale invariant,
standardization only fixes a common temperature scale across measures whose raw
scores have very different magnitudes.

We set $T$ through the sharpness of $p$, measured by its fractional effective
sample size,
\begin{equation}
  \mathrm{ESS}(T) = \frac{\big(\sum_i p_i\big)^2}{\sum_i p_i^2}
    = \frac{1}{\sum_i p_i^2},
  \qquad
  \mathrm{ESS}\%(T) = \frac{100}{N}\,\mathrm{ESS}(T) \in (0, 100] ,
  \label{eq:ess}
\end{equation}
which equals $100\%$ for the uniform distribution and decreases as mass
concentrates on a few transitions. Since $\mathrm{ESS}\%(T)$ increases
monotonically in $T$, we select the temperature $T^\star$ that attains a target
$\mathrm{ESS}\% = \tau$ by bisection and use $p \equiv p(T^\star)$. The target $\tau$ is a hyperparameter.

Table~\ref{tab:app_ess} lists the selected target and the resulting temperature for SCRIM on each task, and the success rate of SCRIM-weighted BC across targets. Performance is robust to the choice of target across the tested targets, above uniform BC throughout.

\begin{table}[ht]
  \caption{\textbf{Sampling target.} Top: selected ESS\% target $\tau$ and resulting temperature
    $T^\star$ for SCRIM. Bottom: success rate (\%) of SCRIM-weighted BC across ESS\% targets.}
  \label{tab:app_ess}
  \centering
  \begin{tabular}{lcccc}
    \toprule
     & StackCube & TurnFaucet & PegInsertion & Hammer \\
    \midrule
    ESS\% target $\tau$ & 44.7 & 40.6 & 69.7 & 50.0 \\
    Temperature $T^\star$ & 1.97 & 1.72 & 3.75 & 5.62 \\
    \bottomrule
  \end{tabular}

  \vspace{6pt}
  \begin{tabular}{lcccc}
    \toprule
    \multicolumn{5}{l}{\textit{StackCube} (uniform BC 68.0)} \\
    ESS\% & 8.7 & 25.2 & 44.7 & 72.0 \\
    Success (\%) & 87.0 & 84.0 & 91.0 & 87.0 \\
    \midrule
    \multicolumn{5}{l}{\textit{TurnFaucet} (uniform BC 39.0)} \\
    ESS\% & 10.0 & 19.8 & 40.6 & 75.0 \\
    Success (\%) & 46.0 & 43.0 & 47.0 & 43.0 \\
    \bottomrule
  \end{tabular}
\end{table}

\subsection{Baseline details}~\label{sec:appendix-baselines}
\vspace{0.1cm} \noindent \textbf{Uniform sampling (i.i.d. setting)}
Uniform sampling treats every state as equally important, sampling uniformly from the dataset during training and selecting a subset of the data with equal probability for each state. This is the standard approach in imitation learning and serves as a natural baseline for our method.

\vspace{0.1cm} \noindent \textbf{Distance to object of interest}
A purely geometric heuristic that uses the distance between the end-effector and
the object of interest as a proxy for criticality: states close to the object are deemed critical and states far from it are not. This baseline relies only on the scene geometry and ignores both the expert action and the value function. Also, it can only be applied when we have the information about the target object's position.

\vspace{0.1cm} \noindent \textbf{Action magnitude}
We observe that the critical states that can be derailed by a small error are often those requiring intricate manipulation (e.g.\ grasping or insertion), which involve smaller commanded actions than the coarse movements such as reaching that exhibit larger action magnitudes. This baseline therefore scores criticality by the (negative) magnitude of the expert action, $-\| a \|$, treating low-magnitude actions as more critical.

\vspace{0.1cm} \noindent \textbf{Policy Q-function variance}
We use the spread of the base BC policy's Q function
$Q^{\pi_b}$ across actions as a measure of criticality, following
\citet{Karino2020Identifying}. Concretely, we sample $M$ actions uniformly
from the action space and score a state by the empirical variance
$\operatorname{Var}_{a \sim \mathcal{U}(\mathcal{A})}\!\big[ Q^{\pi_b}(s, a) \big]$.
High variance indicates that $Q^{\pi_b}$ varies strongly with the action
choice and the state is sensitive to action errors, whereas low variance
indicates the value is similar across many actions and the state is
relatively forgiving.

\vspace{0.1cm} \noindent \textbf{Q-value discrepancy}
\citet{Huang2018EstablishingAT} propose the gap between the
best achievable Q-value at a state and the expected value over all actions, $\max_{a} Q^{\pi_b}(s, a) - \frac{1}{|\mathcal{A}|} \sum_{a} Q^{\pi_b}(s, a)$, as a measure of criticality.
A large discrepancy means that a random action yields a substantially worse long-term return than the optimal action, marking the state as highly critical, whereas a small discrepancy means that most actions yield comparable value.

For the last two baselines, $Q^{\pi_b}$ is trained independently of SCRIM by fitted Q evaluation. The frozen base policy is rolled out in the environment with the sparse success reward, episodes ending on success, and $\gamma = 0.97$.
A twin-Q critic is regressed onto the targets $r + \gamma \min_i Q_{\bar{\phi}_i}(s', \pi_b(s'))$
with Polyak-averaged target networks and no actor.

\begin{figure}[t]
  \centering
  \resizebox{\linewidth}{!}{\input{figures/app_criticality_comparison_stackcube_sim/criticality_comparison.pgf}}
  \caption{\textbf{Per-method criticality along StackCube (ManiSkill) task demonstration.}All measures min--max normalized to a common scale.}
  \label{fig:stackcube_sim_criticality_plots}
\end{figure}

\begin{figure}[t]
  \centering
  \resizebox{\linewidth}{!}{\input{figures/app_criticality_comparison_turnfaucet_sim/criticality_comparison.pgf}}
  \caption{\textbf{Per-method criticality along TurnFaucet (ManiSkill) task demonstration.}All measures min--max normalized to a common scale.}
  \label{fig:turnfaucet_sim_criticality_plots}
\end{figure}

\begin{figure}[t]
  \centering
  \resizebox{\linewidth}{!}{\input{figures/app_criticality_comparison_peginsertionside_sim/criticality_comparison.pgf}}
  \caption{\textbf{Per-method criticality along PegInsertionSide (ManiSkill) task demonstration.}All measures min--max normalized to a common scale.}
  \label{fig:peginsertion_criticality_plots}
\end{figure}

\begin{figure}[t]
  \centering
  \resizebox{\linewidth}{!}{\input{figures/app_error_vs_criticality/error_vs_criticality.pgf}}
  \caption{\textbf{Action error vs.\ state criticality.} Histograms of BC action-error percentile against SCRIM- percentile on two tasks, each under Uniform BC and SCRIM-weighted BC.}
  \label{fig:error_vs_criticality_app}
\end{figure}

\begin{figure}[t]
  \centering
  \resizebox{\linewidth}{!}{\input{figures/app_hw_trajectory_quality_tapehang/hw_trajectory_quality_tapehang.pgf}}
  \caption{SCRIM distinguishes a brittle (left) from a robust (right) tape-hang demonstration.}
  \label{fig:hw_trajectory_quality_tapehang}
\end{figure}

\subsection{Hardware setup and adversary details}~\label{sec:appendix-hw-policy}
\vspace{0.1cm} \noindent \textbf{Base policy.}
Each task's $\pi_{0.5}$ base policy was finetuned for $30$k training steps with an action-chunk
horizon of $16$, and is served at $15$\,Hz with replanning every $4$ steps. The policy observes
three $256{\times}256$ camera views (top, wrist, and side), an $8$-dimensional proprioceptive
vector, and the task language prompt, and acts in a $7$-dimensional delta end-effector pose with
gripper.

\vspace{0.1cm} \noindent \textbf{Adversary.}
Each policy query returns an action chunk together with the mean-pooled VLM prefix embedding, a
$2048$-dimensional summary of the images, proprioception, and prompt. This embedding is sent as the
observation to the adversarial residual RL agent, trained with SAC~\cite{haarnoja18sac} directly on
the robot. The adversary acts on the $4$-step executed chunk, so its action has dimension
$28=7\times4$. Its tanh-squashed output is scaled by a task-specific residual envelope $\rho$ and
added to the base-policy chunk before clipping to the robot safety limits. The translation residual
scale is $0.007$\,m for both tasks; the gripper scale is $0.4$; the rotation scales are
$0.015$\,rad on all axes for StackCube and $(0.0275,0.0131,0.0216)$\,rad for TapeHang. The critic
networks use layer normalization, and the replay buffer is warm-started from the demonstration
dataset.

\vspace{0.1cm} \noindent \textbf{Real-world SCRIM computation.}
For each demonstration chunk we replay the frozen $\pi_{0.5}$ policy to obtain the base chunk
$a_{\mathrm{base}}$ and encode the demonstrated expert chunk as
$\delta_{\mathrm{demo}}=\operatorname{clip}((a_{\mathrm{expert}}-a_{\mathrm{base}})/\rho,-1,1)$.
Let $\delta^\star$ denote the deterministic adversary output at the same state embedding. The
hardware criticality score used in Fig.~\ref{fig:teaser} and Fig.~\ref{fig:hw_trajectory_quality} is
\begin{equation}
  \Delta Q_{\mathrm{demo}}
  =
  \bar Q\big(s, a_{\mathrm{base}}+\rho\odot(\delta_{\mathrm{demo}}+\delta^\star)\big)
  -
  \bar Q\big(s, a_{\mathrm{base}}+\rho\odot\delta_{\mathrm{demo}}\big).
  \label{eq:hw-delta-q}
\end{equation}
This is the same failure-value increase as Eq.~\ref{eq:delta-q}, evaluated around the demonstrated
action in the chunked real-robot action space. For plotting, each chunk-level value is held constant
over its corresponding control ticks.

\subsection{Learning SCRIM without physical interaction via real2sim}~\label{sec:appendix-real2sim}
The interaction SCRIM requires need not take place on the physical system. To test this, we
reconstructed our hardware StackCube demonstrations in simulation via real2sim, trained a
vision-based diffusion policy on the reconstructed data, and trained the SCRIM adversary and
critic entirely in simulation against this policy. Despite the real2sim gap, the
simulation-derived SCRIM scores correlate with the scores obtained from hardware interaction and
recover the same critical phases, peaking around grasping and placement. Using these scores for
SCRIM-weighted training improves the policy's success rate in simulation from $40\%$ to $66\%$.
This suggests that SCRIM's interaction can be shifted to a suitable surrogate environment, and
that approximate criticality estimates obtained under model mismatch remain useful for
downstream policy improvement. Whether this simulated policy improvement transfers back to the
physical robot (sim2real) remains to be verified, and we leave this investigation to future work.

\subsection{Choosing the perturbation bound}
\label{app:eps}
The perturbation bound is shared across action coordinates ($\epsilon_j = \epsilon$ in Eq.~3). We
choose it by rolling out the frozen base policy with uniform random residuals $a = \pi(s) + \epsilon u$, $u \sim \mathcal{U}[-1, 1]^{d}$, resampled at every step (50 episodes per $\epsilon$; Table~\ref{tab:app_random_residual}). If random residuals leave success unchanged, $\epsilon$ is too small for any disturbance to matter; if they already collapse success, every state appears critical and $\Delta Q$ stops discriminating between states. Table~\ref{tab:app_eps} reports SCRIM-weighted BC across $\epsilon$ at a fixed ESS\% target.

\begin{table}[ht]
  \caption{\textbf{Random-residual calibration (StackCube).} Success rate (\%) of the frozen base
    policy under uniform random residuals of size $\epsilon$.}
  \label{tab:app_random_residual}
  \centering
  \begin{tabular}{lccccccc}
    \toprule
    $\epsilon$ & 0 & 0.001 & 0.003 & 0.005 & 0.01 & 0.02 & 0.05 \\
    \midrule
    Success (\%) & 72.0 & 64.0 & 42.0 & 24.0 & 14.0 & 0.0 & 0.0 \\
    \bottomrule
  \end{tabular}
\end{table}

As shown in Table~\ref{tab:app_eps}, SCRIM is nevertheless robust over a broad range of bounds.

\begin{table}[H]
  \caption{\textbf{Sensitivity to the perturbation bound.} Success rate (\%) of SCRIM-weighted BC
    for each $\epsilon$ at a fixed ESS\% target.}
  \label{tab:app_eps}
  \centering
  \begin{tabular}{lccccc}
    \toprule
    $\epsilon$ & 0.003 & 0.005 & 0.01 & 0.02 & 0.05\\
    \midrule
    \multicolumn{6}{l}{\textit{StackCube} (uniform BC 68.0)} \\
    Success (\%) & 83.0 & 91.0 & 73.0 & 77.0 & 73.0 \\
    \midrule
    \multicolumn{6}{l}{\textit{TurnFaucet} (uniform BC 39.0)} \\
    Success (\%) & 42.0 & 47.0 & 37.0 & 42.0 & 42.0 \\
    \bottomrule
  \end{tabular}
\end{table}

\subsection{Causal validation of SCRIM}
\label{app:causal}
We test whether SCRIM predicts where action errors matter by perturbing the base policy at only
a fraction $k$ of steps. At every step, SCRIM is computed around the base action,
$\bar{Q}(s, \pi(s) + \delta_\pi(s)) - \bar{Q}(s, \pi(s))$, and the disturbance is applied when the
score exceeds its $(1-k)$ percentile over 25 clean rollouts. As a control, the same fraction of
steps is selected at random. The disturbance is either the trained $\epsilon$-bounded adversary
$\delta_\pi(s)$ or uniform $\epsilon$-noise $\mathcal{U}[-\epsilon, \epsilon]^{d}$
(100 episodes per condition; Table~\ref{tab:app_causal}).

\begin{table}[H]
  \caption{\textbf{Score-gated disturbance.} Drop in success rate (percentage points) relative to
    clean rollouts when the disturbance is applied at a fraction $k$ of steps.}
  \label{tab:app_causal}
  \centering
  \begin{tabular}{lcccc}
    \toprule
     & StackCube & TurnFaucet & PegInsertion & Hammer \\
    \midrule
    Gated fraction $k$ & 5\% & 5\% & 30\% & 10\% \\
    Clean success (\%) & 69.0 & 29.0 & 13.0 & 66.0 \\
    \midrule
    Adversary at high-SCRIM states & 13.0 & 13.0 & 13.0 & 20.0 \\
    Adversary at random states & 6.0 & 4.0 & 6.0 & 1.0 \\
    Uniform $\epsilon$-noise at high-SCRIM states & 1.0 & 0.0 & 0.0 & 3.0 \\
    \bottomrule
  \end{tabular}
\end{table}

Applied at high-SCRIM states, the adversary degrades success at least twice as much as at the same fraction of random states, while the uniform $\epsilon$-noise at the same states barely changes success. SCRIM therefore captures vulnerability to specific failure-inducing directions, not generic sensitivity to random noise.

\subsection{Robustness to critic error}
\label{app:critic}
We corrupt every critic query inside the SCRIM score of Appendix~\ref{app:causal} with independent
Gaussian noise $\mathcal{N}(0, \sigma^2)$, $\sigma = c \cdot \mathrm{std}(Q)$, where
$\mathrm{std}(Q)$ is the standard deviation of $\bar{Q}(s, \pi(s))$ over clean rollouts. Gating
thresholds are recomputed under the noise, so the gated fraction stays at $k$. The score SNR is
$\mathrm{std}(\Delta Q) / (\sqrt{2}\,\sigma)$. We also gate and attack with the adversary and
critic checkpoint saved halfway through training (Table~\ref{tab:app_critic}).

\begin{table}[ht]
  \caption{\textbf{Degraded critics.} Drop in success rate (percentage points) from the adversary
    at high-SCRIM states, with the score SNR in parentheses, at the gated fractions of
    Table~\ref{tab:app_causal}.}
  \label{tab:app_critic}
  \centering
  \begin{tabular}{lcccc}
    \toprule
     & StackCube & TurnFaucet & PegInsertion & Hammer \\
    \midrule
    Noise scale $c = 0$ & 13.0 & 13.0 & 13.0 & 20.0 \\
    $c = 0.003$ & 16.0 (14.3) & 12.0 (4.1) & 13.0 (12.2) & 20.0 (18.4) \\
    $c = 0.01$ & 14.0 (4.3) & 9.0 (1.2) & 8.0 (3.7) & 19.0 (5.5) \\
    $c = 1$ & 3.0 (0.04) & $-1.0$ (0.01) & 5.0 (0.04) & 2.0 (0.06) \\
    \midrule
    Halfway checkpoint & 4.0 & 1.0 & 5.0 & 23.0 \\
    Random states (reference) & 6.0 & 4.0 & 6.0 & 1.0 \\
    \bottomrule
  \end{tabular}
\end{table}

At $c = 0.003$ (SNR 4.1--18.4), the drops remain comparable to the noise-free values, and at
$c = 0.01$ they remain 8--19 points. Once the noise approaches the local $\Delta Q$ scale
(SNR $\le 1$), the effect becomes task-dependent, and at $c = 1$ the drops collapse to $-1$ to 5
points. SCRIM therefore tolerates moderate zero-mean query noise only while the local value gap
remains resolvable. Undertraining has a task-dependent effect: on Hammer, the halfway checkpoint
is as effective as the final one, whereas on the ManiSkill tasks its drops
fall to at most 5 points, no more than random gating. We therefore do not claim general
robustness to critic undertraining.

\end{document}